\documentclass{article}

\usepackage{spconf,amsmath,amssymb,amsthm,graphicx}
\usepackage{algorithm}
\usepackage{algorithmic}
\usepackage{booktabs}
\usepackage{multirow}
\usepackage{cite}
\usepackage{color,hyperref}
\usepackage[normalem]{ulem}
\newcommand{\cN}{\mathcal{N}}
\newcommand{\cR}{\mathcal{R}}
\newcommand{\cP}{\mathcal{P}}
\newcommand{\cW}{\mathcal{W}}

\newtheorem{definition}{Definition}
\newtheorem{lemma}{Lemma}
\newtheorem{theorem}{Theorem}
\newtheorem{assumption}{Assumption}

\newif\ifwithappendix
\withappendixtrue

\title{FairMean: Promoting Fairness in Distributed Learning\\ under Label Poisoning Attacks}

\name{
	Huigan Zheng\textsuperscript{1,2},
	Jiaojiao Zhang\textsuperscript{3},
	Yongxiang Liu\textsuperscript{2}
\thanks{{Jiaojiao Zhang is supported by the National Natural Science Foundation of China under Grants 12601589. Corresponding author: Huigan Zheng.}}}

\address{
	\textsuperscript{1}Sun Yat-Sen University,
    \textsuperscript{2}Pengcheng Laboratory,
	\textsuperscript{3}Great Bay University	
}

\begin{document}	
	\maketitle	
	\begin{abstract}
		Fairness-aware distributed learning prioritizes clients with large losses to reduce performance disparities, but label poisoning can create  large losses, thereby inducing a fairness--robustness conflict. We propose FairMean to manage this conflict. 
FairMean weights client gradients using a bounded, nondecreasing function of local loss. The increasing weights prioritize high-loss clients to promote fairness, while the upper bound prevents excessive loss-induced amplification of poisoned-client gradients. In the absence of label poisoning, we show that minimizing the FairMean objective is more conducive to solution fairness than minimizing the standard average-loss objective.
Under label poisoning, we establish an average-stationarity bound whose attack-dependent term is proportional to the square of the poisoned-client fraction. Experiments show that FairMean promotes fairness by reducing accuracy variance while improving worst-client accuracy.		
	\end{abstract}	
	\begin{keywords}
		distributed learning, fairness, label poisoning
	\end{keywords}
		\vspace{-2mm}
	\section{Introduction}
	\label{sec:introduction}
	
	Distributed learning trains a global model from the private data of multiple clients through the coordination of a central server \cite{mcmahan2017fedavg,li2020federated,ren2025advances}. The standard formulation minimizes the average of the local losses. When client data are heterogeneous, the jointly trained model may perform well for most clients but poorly for a minority of clients \cite{wang2021heterogeneous,shi2023fairness}. Because a larger client loss generally indicates poorer current performance, fairness-aware formulations prioritize clients with large losses to narrow performance disparities across clients \cite{mukhtiar2025fairness}.
	
	\begin{figure*}[htbp]
		\centering
		\includegraphics[width=0.8\textwidth]{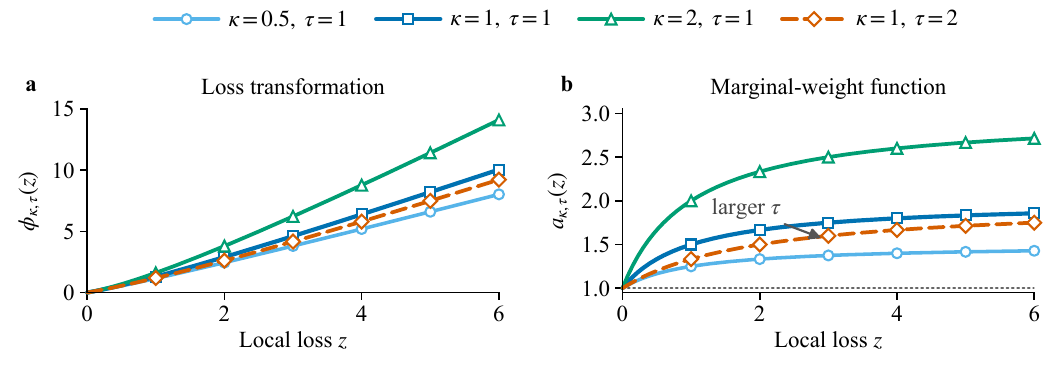}
		\vspace{-2.0em}
		\caption{Loss transformation and marginal-weight function. (a) The increment $\phi_{\kappa,\tau}(z)-z$ grows with $z$, so larger losses undergo a stronger transformation. (b) The marginal weight $a_{\kappa,\tau}(z)$ is nondecreasing and approaches $1+\kappa$. Solid curves fix $\tau=1$ and vary $\kappa$; the dashed curve uses $(\kappa,\tau)=(1,2)$. For $\kappa=1$, the blue point $(z,a_{\kappa,\tau})=(1,1.5)$ on the $\tau=1$ curve and the orange diamond $(2,1.5)$ on the $\tau=2$ curve have the same marginal weight; increasing $\tau$ shifts the required loss from $z=1$ to $z=2$.}
		\label{fig:loss_transformation}
	\end{figure*}
	
	Fairness-aware methods mainly follow two optimization routes. The first transforms or reweights client losses: $q$-FFL~\cite{li2020qffl} uses a power transformation, DRFL~\cite{zhao2022drfl} adapts aggregation weights to client losses, and TERM~\cite{li2023term} applies exponential tilting. The second treats local losses as multiple objectives. FedMDFG~\cite{pan2023fedmdfg} and FedLF~\cite{pan2024fedlf} introduce dynamically adjusted fairness guidance objectives. AdaFed~\cite{hamidi2024adafed} rescales client gradients before computing a common descent direction, while FairMOO~\cite{zheng2026fairmoo} alternates fairness reduction with constrained multi-objective descent.
	
Label poisoning attacks corrupt the local labels of a subset of clients while leaving their features and the prescribed protocol unchanged~\cite{jebreel2024lfighter,peng2025mean}; poisoned clients thus transmit misleading updates that raise their local losses and may degrade the global model on clean data. Label poisoning can be viewed as a special case of Byzantine attacks, where poisoned clients follow the prescribed protocol rather than transmitting arbitrary messages~\cite{lamport1982byzantine}; robust aggregators designed for the latter screen updates via robust statistics or outlier detection~\cite{yin2018byzantine,pillutla2022robust,xia2019faba,molodtsov2026bant}, but as we discuss below, such screening can be ineffective under label poisoning on heterogeneous data.

	Recent studies incorporate robustness mechanisms into fairness-aware learning. For example, the $q$-FFL instantiation of H-nobs combines a fairness objective with post-hoc norm-based screening~\cite{li2020qffl,zhou2023hnobs}. For $q>0$, the $q$-FFL multiplier increases without an explicit upper bound, whereas screening in H-nobs can discard fairness-enhancing gradients under heterogeneity. FedMGDA+~\cite{hu2022fedmgda} instead normalizes client updates before computing a common descent direction for multiple objectives. Neither mechanism directly resolves the fairness--poisoning conflict: screening may discard atypical regular-client updates under heterogeneity, whereas normalization does not provide loss-dependent prioritization. Consequently, how fairness-aware methods behave when poisoned clients follow the prescribed computation and communication rules remains unclear.

	Recent work~\cite{peng2025mean} shows that the mean aggregator can outperform several robust aggregators under label poisoning attacks when client data are sufficiently heterogeneous. Intuitively, robust filtering may discard useful heterogeneous updates, whereas mean aggregation retains them. These observations favor retaining mean aggregation in heterogeneous settings. Plain mean aggregation, however, gives all client gradients equal weights and does not prioritize clients with large losses. Loss-dependent gradient scaling can provide such prioritization, but it creates an inherent conflict: underperforming regular clients and label-poisoned clients may both exhibit large losses. An unbounded scaling rule may therefore amplify gradients induced by corrupted labels. This raises a central question: \textbf{how can we prioritize regular clients with large losses without excessively scaling gradients from poisoned clients?}
	
	To answer this question, we propose FairMean. Our contributions are: \emph{(i)} a loss transformation with a nondecreasing and uniformly bounded marginal weight that prioritizes large-loss clients to promote fairness while limiting, rather than eliminating, extra loss-induced scaling of poisoned-client gradients, and a clean-case global-optimality guarantee for the proposed loss-dispersion measure; \emph{(ii)} an explicit bound on the deviation from the gradient of the objective induced by the loss transformation over regular clients and a corresponding average-stationarity guarantee; and \emph{(iii)} experiments on Fashion-MNIST and CIFAR-10 showing improved fairness and worst-client accuracy under both attacks, with competitive average accuracy. 
	
	\section{Problem Formulation}
	\label{sec:problem}
	

	Consider $n$ clients indexed by $\cN=\{1,\ldots,n\}$ and a global model $w\in\mathbb R^d$. Client $i\in\cN$ has a differentiable local loss $f_i:\mathbb R^d\to[0,\infty)$. Every client is either regular or poisoned. Let $\cR$ and $\cP$ denote the corresponding fixed but unknown index sets. We write $r:=|\cR|\ge1$ and $p:=|\cP|$, so that $r+p=n$, and define the poisoned-client fraction as $\delta:=p/n<1$. The nominal objective over the regular clients is
	\begin{equation}
		\min_{w\in\mathbb R^d}F_{\cR}(w),\quad
		F_{\cR}(w):=\frac1r\sum_{i\in\cR}f_i(w).
		\label{eq:regular_problem}
	\end{equation}
	
	\begin{definition}
		\label{def:label_poisoning}
		Following \cite{peng2025mean}, client $i\in\cP$ changes an arbitrary subset of its labels while leaving all other data unchanged. It computes $\widetilde f_i$ on the relabeled data and follows the prescribed protocol; it cannot transmit an arbitrary message.
	\end{definition}
	
	The transmitted update therefore changes only through relabeling, although corrupted labels may still produce a large loss and a misleading gradient.

	
	\begin{definition}
		\label{def:regular_client_fairness}
		Following \cite{li2020qffl}, fairness concerns the uniformity of model performance across clients. Under label poisoning, we assess fairness over the regular clients $\cR$.
	\end{definition}
	
	Regular-client loss dispersion serves as a training-time proxy for performance disparity. Since a large regular-client loss often indicates poor current performance, assigning it a larger marginal weight can promote fairness. However, poisoned labels may also cause large losses, so unrestricted prioritization can amplify poisoned gradients. This conflict motivates Section~\ref{sec:method}.
	\vspace{-2mm}
	\section{Proposed FairMean}
	\label{sec:method}
Label poisoning changes labels while leaving client features and the prescribed training procedure unchanged, yet it can produce large losses and misleading gradients. Since mean aggregation can outperform robust alternatives under such attacks in heterogeneous settings~\cite{peng2025mean}, we propose FairMean, which retains mean aggregation and introduces a bounded, nondecreasing marginal weight to prioritize large losses while limiting the amplification of poisoned-client gradients.
    
	
	We seek a differentiable loss transformation whose derivative is \emph{(i)} nondecreasing to prioritize large regular-client losses, \emph{(ii)} uniformly bounded to cap extra loss-induced scaling of every client gradient, including gradients produced from poisoned labels, and \emph{(iii)} Lipschitz continuous to avoid abrupt weight changes. These requirements lead to
	\begin{equation}
		\phi_{\kappa,\tau}(z):=z+\kappa\left[z-\tau\log(1+z/\tau)\right], \quad \kappa,\tau>0.
		\label{eq:fair_transform}
	\end{equation}
	For analysis, define the regular-client objective induced by the loss transformation as
	\begin{equation}
		\Phi_{\cR}^{\kappa,\tau}(w)
		:=\frac{1}{r}\sum_{i\in\cR}\phi_{\kappa,\tau}(f_i(w)).
		\label{eq:fair_objective}
	\end{equation}
	To characterize the gradient of $\Phi_{\cR}^{\kappa,\tau}$, define the corresponding marginal-weight function as
	\begin{equation}
		a_{\kappa,\tau}(z):=1+\kappa\frac{z}{z+\tau}.
		\label{eq:bounded_weight}
	\end{equation}
	By the chain rule,
	\begin{equation}
		\nabla\Phi_{\cR}^{\kappa,\tau}(w)
		=
		\frac1r\sum_{i\in\cR}a_{\kappa,\tau}(f_i(w))\nabla f_i(w).
	\end{equation}
	Thus, $a_{\kappa,\tau}(f_i(w))$ is client $i$'s \emph{marginal weight}, namely, the scalar that multiplies $\nabla f_i(w)$.
	For $z\ge0$, $\phi_{\kappa,\tau}(z)\ge0$. The marginal weight and
	its derivative satisfy
	\[
	\begin{aligned}
		1 &\le a_{\kappa,\tau}(z)\le1+\kappa,\\
		0 &\le a_{\kappa,\tau}'(z)
		=\frac{\kappa\tau}{(z+\tau)^2}
		\le\frac{\kappa}{\tau}.
	\end{aligned}
	\]
	Thus, $a_{\kappa,\tau}$ is nondecreasing, bounded by $1+\kappa$, and $\kappa/\tau$-Lipschitz continuous. Moreover, $a_{\kappa,\tau}(\tau)=1+\kappa/2$, so increasing $\tau$ shifts the midpoint to a larger loss. The cap $1+\kappa$ directly limits the additional loss-induced scaling.
	
	To connect the loss transformation with fairness, define \(\psi_\tau(z):=z-\tau\log(1+z/\tau)\) and the dispersion term
	\[
	D_\tau(w):=\frac1r\sum_{i\in\cR}\psi_\tau(f_i(w))
	-\psi_\tau(F_{\cR}(w)).
	\]
	Substituting the loss transformation in~\eqref{eq:fair_transform} into~\eqref{eq:fair_objective} shows that $\Phi_{\cR}^{\kappa,\tau}(w)$ in~\eqref{eq:fair_objective} can be equivalently written as
	\begin{equation}
		\Phi_{\cR}^{\kappa,\tau}(w)
		=\phi_{\kappa,\tau}(F_{\cR}(w))+\kappa D_\tau(w).
		\label{eq:objective_decomposition}
	\end{equation}
	The first term depends only on the mean regular-client loss, while $D_\tau(w)\ge0$ measures loss dispersion and vanishes exactly when all regular-client losses are equal. 
	
	Moreover, $\psi_\tau''(z)=\tau/(z+\tau)^2$. If $0\le f_i(w)\le B$ for all $i\in\cR$, a second-order Taylor expansion around $F_{\cR}(w)$, followed by averaging, cancels the linear terms and yields
	\begin{equation}
		\frac{\tau}{2(B+\tau)^2}\mathrm{Var}_{\cR}(w)
		\le D_\tau(w)\le
		\frac{1}{2\tau}\mathrm{Var}_{\cR}(w),
		\label{eq:dispersion_bounds}
	\end{equation}
	where $\mathrm{Var}_{\cR}(w):=r^{-1}\sum_{i\in\cR}[f_i(w)-F_{\cR}(w)]^2$. Thus, minimizing the FairMean objective explicitly penalizes dispersion among regular-client losses, linking the loss transformation to fairness.
		
	\begin{algorithm}[H]
		\caption{FairMean}
		\label{alg:fairmean}
		\begin{algorithmic}[1]
			\REQUIRE Initial model $w^0$, number of rounds $T$, stepsize $\alpha>0$, $\kappa>0$, and $\tau>0$
			\FOR{$t=0,\ldots,T-1$}
			\STATE Server broadcasts $w^t$
			\FOR{$i=1,\ldots,n$ \textbf{ in parallel}}
			\STATE Compute $\widehat f_i(w^t)$ and $\nabla\widehat f_i(w^t)$
			\STATE $v_i^t\gets a_{\kappa,\tau}(\widehat f_i(w^t))\nabla\widehat f_i(w^t)$
			\STATE Send $v_i^t$ to the server
			\ENDFOR
			\STATE $w^{t+1}\gets w^t-\alpha 
			\frac{1}{n}\sum_{i=1}^n v_i^t$
			\ENDFOR
			\RETURN $w^T$
		\end{algorithmic}
	\end{algorithm}
	
	
	Because the server does not know $\cR$, FairMean applies the loss transformation to every client. Let $\widehat f_i=f_i$ for $i\in\cR$ and $\widehat f_i=\widetilde f_i$ for $i\in\cP$, denoting the loss evaluated by client $i$. At round $t$, client $i$ transmits
	\begin{equation}
		v_i^t:=a_{\kappa,\tau}(\widehat f_i(w^t))
		\nabla\widehat f_i(w^t).
		\label{eq:client_message}
	\end{equation}
	Using a fixed stepsize $\alpha>0$, the server averages the received vectors and updates
	\begin{equation}
		w^{t+1}:=w^t-\frac{\alpha}{n}
		\sum_{i=1}^{n}v_i^t.
		\label{eq:server_update}
	\end{equation}

	\section{Theoretical Analysis}
	\label{sec:theory}
	We first show that, without label poisoning, minimizing the proposed FairMean objective in~\eqref{eq:fair_objective} is more conducive to solution fairness than minimizing the original objective in~\eqref{eq:regular_problem}. Throughout this paper, $\|\cdot\|$ denotes the Euclidean norm for vectors and the spectral norm for matrices. Proofs of all theoretical results are provided in	Appendix~\ref{app:proofs}.
	
	\begin{theorem}
		\label{thm:clean_dispersion}
		Assume $p=0$ and that $F_{\cR}$ and $\Phi_{\cR}^{\kappa,\tau}$ admit global minimizers $w_{\mathrm{avg}}^\star$ and $w_{\mathrm{FM}}^\star$, respectively. Then
		\[
		D_\tau(w_{\mathrm{FM}}^\star)
			\le D_\tau(w_{\mathrm{avg}}^\star).
		\]
	\end{theorem}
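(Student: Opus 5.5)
The argument compares the two optimality conditions and uses the decomposition \eqref{eq:objective_decomposition}, $\Phi_{\cR}^{\kappa,\tau}(w)=\phi_{\kappa,\tau}(F_{\cR}(w))+\kappa D_\tau(w)$. Since $p=0$, the regular-client objective is exactly what is being minimized, so we can work with global minimality alone and need no stationarity or smoothness.

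The first step records two inequalities. From global optimality of $w_{\mathrm{FM}}^\star$ for $\Phi_{\cR}^{\kappa,\tau}$,
\[
\phi_{\kappa,\tau}(F_{\cR}(w_{\mathrm{FM}}^\star))+\kappa D_\tau(w_{\mathrm{FM}}^\star)\le \phi_{\kappa,\tau}(F_{\cR}(w_{\mathrm{avg}}^\star))+\kappa D_\tau(w_{\mathrm{avg}}^\star).
\]
From global optimality of $w_{\mathrm{avg}}^\star$ for $F_{\cR}$, we have $F_{\cR}(w_{\mathrm{avg}}^\star)\le F_{\cR}(w_{\mathrm{FM}}^\star)$.

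The second step shows that $\phi_{\kappa,\tau}$ is nondecreasing on $[0,\infty)$. This holds because $\phi_{\kappa,\tau}'(z)=a_{\kappa,\tau}(z)\ge 1>0$, as already noted in the paper. Since all losses are nonnegative, both $F_{\cR}$ values lie in $[0,\infty)$. Hence
\[
\phi_{\kappa,\tau}(F_{\cR}(w_{\mathrm{avg}}^\star))\le\phi_{\kappa,\tau}(F_{\cR}(w_{\mathrm{FM}}^\star)).
\]

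Combining the two steps, we substitute this into the first inequality. This gives $\kappa D_\tau(w_{\mathrm{FM}}^\star)\le \kappa D_\tau(w_{\mathrm{avg}}^\star)+[\phi_{\kappa,\tau}(F_{\cR}(w_{\mathrm{avg}}^\star))-\phi_{\kappa,\tau}(F_{\cR}(w_{\mathrm{FM}}^\star))]$. The bracketed term is $\le 0$ by the second step. Dividing by $\kappa>0$ yields the claim.

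The only delicate point is verifying \eqref{eq:objective_decomposition} itself, which the paper states; if needed, it follows by writing $\phi_{\kappa,\tau}=\mathrm{id}+\kappa\psi_\tau$, averaging, and adding and subtracting $\kappa\psi_\tau(F_{\cR}(w))$. I do not expect any real obstacle beyond this. The nonnegativity and finiteness of all quantities are immediate from $f_i\ge0$ and the existence of the minimizers.
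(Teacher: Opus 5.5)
Your proposal is correct and follows essentially the same argument as the paper. Both use the decomposition \eqref{eq:objective_decomposition}, the monotonicity of $\phi_{\kappa,\tau}$ from $\phi_{\kappa,\tau}'=a_{\kappa,\tau}\ge1$, and the two global-optimality inequalities, then divide by $\kappa>0$.
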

	
	Theorem~\ref{thm:clean_dispersion} shows that, without label poisoning, $w_{\mathrm{FM}}^\star$, a global minimizer of the objective in~\eqref{eq:fair_objective}, has no larger $D_\tau$ than $w_{\mathrm{avg}}^\star$, a global minimizer of the original objective $F_{\cR}$ in~\eqref{eq:regular_problem}. Together with \eqref{eq:dispersion_bounds}, the theorem provides a theoretical guarantee that minimizing the objective induced by $\phi_{\kappa,\tau}$ in~\eqref{eq:fair_objective} promotes fairness at global optimality. 

	We now analyze FairMean as inexact gradient descent with respect to $\Phi_{\cR}^{\kappa,\tau}$.
	
	\begin{assumption}
		\label{ass:smoothness}
		Let the convex set $\cW$ contain $w^0,\ldots,w^T$. For every $i\in\cR$, the regular-client loss $f_i$ is $L$-smooth on $\cW$.
	\end{assumption}

	\begin{assumption}
		\label{ass:boundedness}
		For some $G,A_{\mathrm{LP}}\ge0$ and every $w\in\cW$, the following bounds hold for $i\in\cR$ and $i\in\cP$, respectively:
		\[
		\|\nabla f_i(w)\|\le G,\quad
		\|\nabla\widetilde f_i(w)-\nabla F_{\cR}(w)\|
		\le A_{\mathrm{LP}}.
		\]
	\end{assumption}
	
\begin{table*}[!t]
		\centering
		\caption{CIFAR-10 test performance of regular clients in a 10-client network. Arrows indicate preferred directions.}
		\label{tab:main_results}
		\setlength{\tabcolsep}{4pt}
		\renewcommand{\arraystretch}{1.02}
		\begin{tabular}{lccc ccc ccc}
			\toprule
			\multirow{2}{*}{Method}
			& \multicolumn{3}{c}{No attack}
			& \multicolumn{3}{c}{Random flip}
			& \multicolumn{3}{c}{Pairwise flip}\\
			\cmidrule(lr){2-4}\cmidrule(lr){5-7}\cmidrule(lr){8-10}
			& Avg.$\uparrow$ & Fairness$\downarrow$ & Worst$\uparrow$
			& Avg.$\uparrow$ & Fairness$\downarrow$ & Worst$\uparrow$
			& Avg.$\uparrow$ & Fairness$\downarrow$ & Worst$\uparrow$\\
			\midrule
			FedAvg
			& \textbf{71.54} & 40.85 & 54.57
			& 60.59 & 177.70 & 37.62
			& 59.98 & 203.39 & 32.96\\
			$q$-FFL
			& 67.93 & 40.50 & \textbf{57.24}
			& 60.62 & 106.12 & 46.30
			& \underline{62.66} & 131.22 & 38.42\\
			AdaFed
			& \underline{70.89} & \textbf{31.83} & \underline{56.18}
			& 61.18 & 110.59 & 45.98
			& 61.71 & \underline{106.28} & \underline{50.53}\\
			FedMGDA+
			& 70.33 & 37.92 & 54.41
			& 55.90 & \underline{104.22} & 45.18
			& \textbf{63.09} & 118.47 & 49.04\\
			H-nobs
			& 67.93 & 40.50 & \textbf{57.24}
			& \textbf{63.38} & 108.06 & \underline{50.83}
			& 56.35 & 161.16 & 41.18\\
			$q$-FFL+CWTM
			& 67.93 & 40.50 & \textbf{57.24}
			& 32.71 & 406.58 & 0.49
			& 34.29 & 379.87 & 8.73\\
			FairMean
			& 70.15 & \underline{34.46} & 55.94
			& \underline{61.54} & \textbf{85.33} & \textbf{52.01}
			& 61.37 & \textbf{99.34} & \textbf{50.97}\\
			\bottomrule
		\end{tabular}
	\end{table*}

	We next show that Assumption~\ref{ass:boundedness} holds for multiclass cross-entropy under label poisoning.
	\begin{lemma}
		\label{lem:lp_instantiation}
		Suppose all pre- and post-poisoning local losses are empirical cross-entropy losses of a $K$-class model with logits $h(w;x)\in\mathbb R^K$. Assume that the logit Jacobian $\nabla_w h(w;x)\in\mathbb R^{K\times d}$ satisfies $\|\nabla_w h(w;x)\|\le\Gamma$ on $\cW$ for every local sample. Also assume $\|\nabla f_i(w)-\nabla F_{\cR}(w)\|\le H_{\cP}$ for every $i\in\cP$ and $w\in\cW$. Let $\rho\in[0,1]$ be the largest fraction of relabeled samples at any poisoned client. Then Assumption~\ref{ass:boundedness} holds with
		\begin{equation}
			G:=\sqrt{2}\Gamma,\quad
			A_{\mathrm{LP}}:=H_{\cP}+\sqrt{2}\,\rho\Gamma.
			\label{eq:explicit_alp}
		\end{equation}
	\end{lemma}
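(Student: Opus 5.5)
Write the per-sample cross-entropy loss as $\ell(w;x,y)=-\log\sigma_y(h(w;x))$, where $\sigma$ is the softmax. By the chain rule,
\[
\nabla_w\ell(w;x,y)=\nabla_w h(w;x)^\top\bigl(\sigma(h(w;x))-e_y\bigr).
\]
The core estimate is a bound on $\|\sigma-e_y\|$. Write $s=\sigma(h)$, a probability vector. Then
\[
\|s-e_y\|^2=(1-s_y)^2+\sum_{k\ne y}s_k^2\le(1-s_y)^2+\Bigl(\sum_{k\ne y}s_k\Bigr)^2=2(1-s_y)^2\le 2 .
\]
Combined with $\|\nabla_w h(w;x)\|\le\Gamma$, this gives $\|\nabla_w\ell\|\le\sqrt2\,\Gamma$ for every sample and label. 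The first bound of Assumption~\ref{ass:boundedness} then follows directly: for $i\in\cR$, $f_i$ is an empirical average of such per-sample losses, so the triangle inequality gives $\|\nabla f_i(w)\|\le\sqrt2\,\Gamma=G$.

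For the second bound, fix $i\in\cP$ with local samples $(x_j,y_j)_{j=1}^{m}$. Poisoning changes the labels only on a subset $S$ of these samples, with $|S|\le\rho m$. The features, and hence the logits, are unchanged, so the per-sample gradients agree off $S$. On $S$,
\[
\nabla_w\ell(w;x_j,\tilde y_j)-\nabla_w\ell(w;x_j,y_j)=\nabla_w h(w;x_j)^\top\bigl(e_{y_j}-e_{\tilde y_j}\bigr),
\]
because the softmax terms cancel. Its norm is at most $\Gamma\|e_{y_j}-e_{\tilde y_j}\|\le\sqrt2\,\Gamma$. Averaging over the $m$ samples, with only $|S|\le\rho m$ nonzero differences, gives
\[
\|\nabla\widetilde f_i(w)-\nabla f_i(w)\|\le\sqrt2\,\rho\,\Gamma .
\]

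Finally, the triangle inequality with the hypothesis $\|\nabla f_i(w)-\nabla F_{\cR}(w)\|\le H_{\cP}$ yields
\[
\|\nabla\widetilde f_i(w)-\nabla F_{\cR}(w)\|\le H_{\cP}+\sqrt2\,\rho\,\Gamma=A_{\mathrm{LP}},
\]
for all $w\in\cW$. This is the second bound of Assumption~\ref{ass:boundedness}.

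The step needing the most care is the label-difference bound. A crude approach would bound each of the two per-sample gradients separately by $\sqrt2\,\Gamma$, giving $2\sqrt2\,\rho\Gamma$. The stated constant $\sqrt2$ instead requires noticing that the softmax term cancels, leaving only the difference of two distinct one-hot vectors, whose norm is exactly $\sqrt2$. A smaller point is that $\rho$ must be read as a fraction of the client's empirical sample count, so that the averaging step is valid.
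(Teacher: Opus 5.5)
Your proposal is correct and follows essentially the same argument as the paper. Both use the chain-rule gradient $\nabla_w h^\top(\sigma-e_y)$ with $\|\sigma-e_y\|\le\sqrt2$, the cancellation of the softmax terms on relabeled samples, averaging with relabeled fraction at most $\rho$, and the triangle inequality with $H_{\cP}$ (the only addition is that you derive $\|\sigma-e_y\|\le\sqrt2$ explicitly, which the paper simply asserts).
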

	
	\begin{lemma}
		\label{lem:inexact_direction}
		Under Assumption~\ref{ass:boundedness}, define $e^t:=n^{-1}\sum_{i=1}^n v_i^t-\nabla\Phi_{\cR}^{\kappa,\tau}(w^t)$. Then~\eqref{eq:server_update} can be written as
		\[
			w^{t+1}=w^t-\alpha\big[\nabla\Phi_{\cR}^{\kappa,\tau}(w^t)+e^t\big].
		\]
		Moreover, the error satisfies
		\begin{equation}
			\|e^t\|\le \epsilon(\delta,\kappa),\quad
			\epsilon(\delta,\kappa):=\delta\big[(1+\kappa)A_{\mathrm{LP}}+2\kappa G\big].
			\label{eq:effective_radius}
		\end{equation}
	\end{lemma}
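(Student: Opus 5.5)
The plan is a direct algebraic split of $e^t$ into a regular-client part and a poisoned-client part, followed by termwise bounds that use only $1\le a_{\kappa,\tau}\le 1+\kappa$ and Assumption~\ref{ass:boundedness}. The rewriting $w^{t+1}=w^t-\alpha[\nabla\Phi_{\cR}^{\kappa,\tau}(w^t)+e^t]$ is immediate from~\eqref{eq:server_update} and the definition of $e^t$, so all the work is in bounding $\|e^t\|$.

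\textbf{Step 1: factor out $\delta$.} If $p=0$, then $n=r$, every $\widehat f_i=f_i$, and by the chain-rule formula for $\nabla\Phi_{\cR}^{\kappa,\tau}$ we get $e^t=0$. So assume $p\ge1$. Abbreviate $a_i:=a_{\kappa,\tau}(f_i(w^t))$ for $i\in\cR$ and $\tilde a_i:=a_{\kappa,\tau}(\widetilde f_i(w^t))$ for $i\in\cP$. Splitting the average in~\eqref{eq:client_message}--\eqref{eq:server_update} over $\cR$ and $\cP$ and using $\tfrac1n-\tfrac1r=-\tfrac{p}{nr}$ gives
\[
e^t=\frac{p}{n}\Big[\frac1p\sum_{i\in\cP}\tilde a_i\nabla\widetilde f_i(w^t)-\nabla\Phi_{\cR}^{\kappa,\tau}(w^t)\Big].
\]
By the triangle inequality, it then suffices to show that for each $i\in\cP$,
\[
\|\tilde a_i\nabla\widetilde f_i(w^t)-\nabla\Phi_{\cR}^{\kappa,\tau}(w^t)\|\le(1+\kappa)A_{\mathrm{LP}}+2\kappa G .
\]

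\textbf{Step 2: bound a single poisoned term.} Add and subtract $\tilde a_i\nabla F_{\cR}(w^t)$ to get
\[
\tilde a_i\big(\nabla\widetilde f_i-\nabla F_{\cR}\big)+\frac1r\sum_{j\in\cR}(\tilde a_i-a_j)\nabla f_j .
\]
For the first summand, $\tilde a_i\le1+\kappa$ and Assumption~\ref{ass:boundedness} give the bound $(1+\kappa)A_{\mathrm{LP}}$. For the second, both weights lie in $[1,1+\kappa]$, so $|\tilde a_i-a_j|\le\kappa$; combined with $\|\nabla f_j\|\le G$ this gives the bound $\kappa G$.

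Hence each poisoned term is at most $(1+\kappa)A_{\mathrm{LP}}+\kappa G$, which is no larger than $(1+\kappa)A_{\mathrm{LP}}+2\kappa G$. Multiplying by $\delta=p/n$ yields~\eqref{eq:effective_radius}. The extra $\kappa G$ slack in the stated constant is harmless. It also covers the cruder route in which $\tilde a_i\nabla F_{\cR}$ and $\frac1r\sum_{j\in\cR}a_j\nabla f_j$ are each compared to $\nabla F_{\cR}$ separately: each deviation is at most $\kappa G$, for a total of $2\kappa G$.

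\textbf{Main obstacle.} The only delicate step is the decomposition in Step 2. The aim is to make the poisoned gradient appear only through the difference $\nabla\widetilde f_i-\nabla F_{\cR}$ controlled by $A_{\mathrm{LP}}$, and to charge the weight mismatch to the bounded range of $a_{\kappa,\tau}$ rather than to any magnitude bound on $\nabla\widetilde f_i$, which Assumption~\ref{ass:boundedness} does not provide. Once that add-and-subtract is chosen, the remaining estimates are routine.
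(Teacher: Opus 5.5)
Your proof is correct and follows the paper's argument: the same factorization $e^t=\delta\bigl(\frac1p\sum_{i\in\cP}\widetilde a_i\nabla\widetilde f_i(w^t)-\nabla\Phi_{\cR}^{\kappa,\tau}(w^t)\bigr)$, and the same add-and-subtract of $\widetilde a_i\nabla F_{\cR}(w^t)$ so that the poisoned gradient appears only through $A_{\mathrm{LP}}$.

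The one difference is how you handle the leftover weight mismatch. The paper splits it as $(\widetilde a_i-1)\nabla F_{\cR}+(\nabla F_{\cR}-\nabla\Phi_{\cR}^{\kappa,\tau})$ and pays $\kappa G$ for each piece; this is exactly your ``cruder route''. Your single term $\frac1r\sum_{j\in\cR}(\widetilde a_i-a_j)\nabla f_j$ with $|\widetilde a_i-a_j|\le\kappa$ exploits cancellation, so you actually prove the sharper radius $\delta[(1+\kappa)A_{\mathrm{LP}}+\kappa G]$.

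State explicitly, as the paper does, that $\widetilde a_i\in[1,1+\kappa]$ relies on $\widetilde f_i(w^t)\ge0$.
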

	
	By~\eqref{eq:effective_radius}, the attack contribution to the stationarity bound below contains the explicit factor $\delta^2$.

	\begin{theorem}
		\label{thm:stationarity}
		Under Assumptions~\ref{ass:smoothness}--\ref{ass:boundedness}, let $L_\Phi(\kappa,\tau):=(1+\kappa)L+(\kappa/\tau)G^2$. For $0<\alpha\le1/L_\Phi(\kappa,\tau)$, FairMean satisfies
		\begin{equation}
			\frac1T\sum_{t=0}^{T-1} \|\nabla\Phi_{\cR}^{\kappa,\tau}(w^t)\|^2 \le \frac{2\Phi_{\cR}^{\kappa,\tau}(w^0)}{\alpha T} +\epsilon(\delta,\kappa)^2.
			\label{eq:stationarity_bound}
		\end{equation}
	\end{theorem}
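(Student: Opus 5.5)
The plan is to run the standard descent-lemma analysis for inexact gradient descent on $\Phi_{\cR}^{\kappa,\tau}$, using Lemma~\ref{lem:inexact_direction} for the update form and the error bound.

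\textbf{Step 1: smoothness of $\Phi_{\cR}^{\kappa,\tau}$.} We first show that $\nabla\Phi_{\cR}^{\kappa,\tau}$ is $L_\Phi(\kappa,\tau)$-Lipschitz on the convex set $\cW$. For $i\in\cR$ and $w,w'\in\cW$, split
\[
a(f_i(w))\nabla f_i(w)-a(f_i(w'))\nabla f_i(w') = a(f_i(w))[\nabla f_i(w)-\nabla f_i(w')] + [a(f_i(w))-a(f_i(w'))]\nabla f_i(w').
\]
The first term is at most $(1+\kappa)L\|w-w'\|$, since $a\le 1+\kappa$ and $f_i$ is $L$-smooth. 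For the second term, $a$ is $\kappa/\tau$-Lipschitz. By the mean value theorem on the convex set $\cW$, $f_i$ is $G$-Lipschitz there because $\|\nabla f_i\|\le G$. Hence the second term is at most $(\kappa/\tau)G\cdot G\|w-w'\|$. Averaging over $i\in\cR$ gives the constant $L_\Phi(\kappa,\tau)$.

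This is the step I expect to need the most care. The bounds of Assumptions~\ref{ass:smoothness}--\ref{ass:boundedness} hold only on $\cW$. The descent lemma, however, needs the segment between $w^t$ and $w^{t+1}$ to lie in $\cW$. This holds because $\cW$ is convex and contains both iterates.

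\textbf{Step 2: one-step descent.} Write $g^t:=\nabla\Phi_{\cR}^{\kappa,\tau}(w^t)$, so that $w^{t+1}=w^t-\alpha(g^t+e^t)$. The descent lemma gives
\[
\Phi(w^{t+1})\le \Phi(w^t)-\alpha\langle g^t,g^t+e^t\rangle+\tfrac{L_\Phi\alpha^2}{2}\|g^t+e^t\|^2.
\]
Using $\alpha L_\Phi\le1$, the last term is at most $\tfrac{\alpha}{2}\|g^t+e^t\|^2$. We then apply the polarization identity
\[
-\langle g,g+e\rangle+\tfrac12\|g+e\|^2=-\tfrac12\|g\|^2+\tfrac12\|e\|^2 .
\]
This yields
\[
\Phi(w^{t+1})\le\Phi(w^t)-\tfrac{\alpha}{2}\|g^t\|^2+\tfrac{\alpha}{2}\|e^t\|^2 .
\]

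\textbf{Step 3: telescoping.} We sum the one-step inequality over $t=0,\dots,T-1$. Since $\phi_{\kappa,\tau}(z)\ge0$ for $z\ge0$ and $f_i\ge0$, we have $\Phi(w^T)\ge0$. Bounding $\|e^t\|\le\epsilon(\delta,\kappa)$ by Lemma~\ref{lem:inexact_direction} and dividing by $\alpha T/2$ gives \eqref{eq:stationarity_bound}.
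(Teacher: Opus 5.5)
Your proposal is correct and follows the paper's proof essentially step for step. It uses the same add-and-subtract split to get $L_\Phi(\kappa,\tau)$-smoothness on $\cW$, via $G$-Lipschitzness of $f_i$ on the convex set and $\kappa/\tau$-Lipschitzness of $a_{\kappa,\tau}$; it then applies the descent lemma along the segment in $\cW$ with $\alpha L_\Phi(\kappa,\tau)\le1$ and the identity $-\langle g,g+e\rangle+\tfrac12\|g+e\|^2=\tfrac12(\|e\|^2-\|g\|^2)$, and finishes by telescoping with $\Phi_{\cR}^{\kappa,\tau}(w^T)\ge0$ and $\|e^t\|\le\epsilon(\delta,\kappa)$.
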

	The optimization term in~\eqref{eq:stationarity_bound} decays as $O(1/T)$. Increasing $\kappa$ tightens the stepsize bound, whereas increasing $\tau$ relaxes it through the term $(\kappa/\tau)G^2$, consistent with Fig.~\ref{fig:loss_transformation}(b). Thus, more aggressive prioritization of high-loss clients, induced by a larger $\kappa$ or a smaller $\tau$, requires a more conservative theoretical stepsize.
		\vspace{-1mm}
	
	\begin{figure}[t]
		\centering
		\includegraphics[width=0.95\columnwidth]{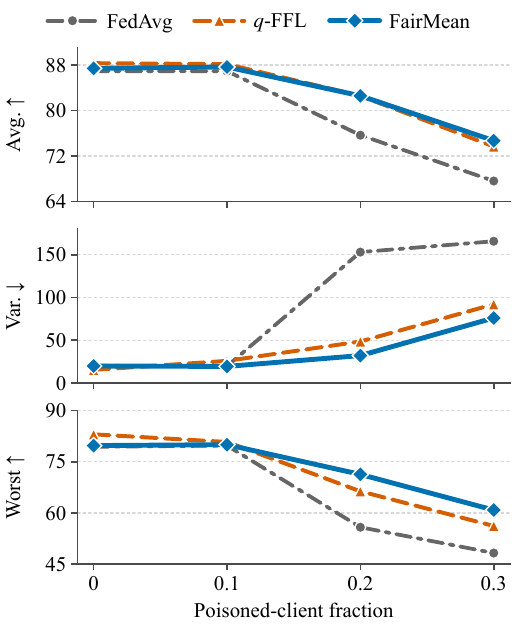}
		\vspace{-2.0em}
		\caption{Fashion-MNIST performance under pairwise flip versus the poisoned-client fraction.}
		\label{fig:attack_fraction}
	\end{figure}
	\vspace{-1mm}
	\section{Experiments}
	\label{sec:experiments}
		\vspace{-1mm}
	\noindent\textbf{Experimental setup.} We use 10 clients, an MLP for Fashion-MNIST, and a 5-layer CNN for CIFAR-10\footnote{\label{code} \url{https://github.com/Zhg9300/FairnessUnderLP}}. Their training samples follow Dirichlet partitions with concentrations 0.5 and 0.1, respectively. The main experiments poison two clients. Random flip selects an incorrect label uniformly, while pairwise flip maps class $c$ to $9-c$. Baselines are FedAvg~\cite{mcmahan2017fedavg}, $q$-FFL~\cite{li2020qffl}, AdaFed~\cite{hamidi2024adafed}, FedMGDA+~\cite{hu2022fedmgda}, H-nobs~\cite{zhou2023hnobs}, and $q$-FFL+CWTM~\cite{yin2018byzantine}.
	
	\noindent\textbf{Evaluation metrics.} Using clean test labels, we evaluate regular clients by accuracy variance, worst-client accuracy, and average accuracy, which measure fairness, poorest-client performance, and overall performance, respectively.
	
	\noindent\textbf{Overall performance.} Tables~\ref{tab:main_results} and~\ref{tab:fashion_results} show that FairMean remains competitive without attacks. Under label poisoning, it achieves the lowest accuracy variance and highest worst-client accuracy in all four dataset--attack settings, while retaining competitive average accuracy.
	
	\noindent\textbf{Impact of the poisoned-client fraction.} On Fashion-MNIST under pairwise flip, we vary the poisoned-client fraction from 0 to 0.3. Figure~\ref{fig:attack_fraction} shows that FairMean retains its fairness advantage. At $\delta=0.3$, it reduces accuracy variance from $91.99$ to $75.97$ and raises worst-client accuracy from $56.19\%$ to $60.85\%$ relative to $q$-FFL.

	\noindent\textbf{Future work.}
	Future work will extend FairMean to stochastic client participation and develop mechanisms that account for benign data heterogeneity when mitigating label poisoning.
	
	\clearpage
	\raggedbottom
	\bibliographystyle{IEEEbib}
	\bibliography{strings,refs}

	\ifwithappendix
	\clearpage
	\flushbottom
	\appendix

	\section{Additional Experimental Results}
	\label{app:experiments}
	\begin{table*}[t]
		\centering
		\caption{Fashion-MNIST test performance of regular clients in a 10-client network. Arrows indicate preferred directions.}
		\label{tab:fashion_results}
		\setlength{\tabcolsep}{4pt}
		\renewcommand{\arraystretch}{1.02}
		\begin{tabular}{lccc ccc ccc}
			\toprule
			\multirow{2}{*}{Method}
			& \multicolumn{3}{c}{No attack}
			& \multicolumn{3}{c}{Random flip}
			& \multicolumn{3}{c}{Pairwise flip}\\
			\cmidrule(lr){2-4}\cmidrule(lr){5-7}\cmidrule(lr){8-10}
			& Avg.$\uparrow$ & Fairness$\downarrow$ & Worst$\uparrow$
			& Avg.$\uparrow$ & Fairness$\downarrow$ & Worst$\uparrow$
			& Avg.$\uparrow$ & Fairness$\downarrow$ & Worst$\uparrow$\\
			\midrule
			FedAvg
			& 86.90 & 20.00 & 79.38
			& 77.08 & 141.43 & 58.23
			& 75.65 & 153.12 & 55.82\\
			$q$-FFL
			& \underline{88.35} & 16.11 & \underline{81.33}
			& 81.69 & 73.66 & 65.97
			& 82.56 & \underline{48.41} & 66.34\\
			AdaFed
			& 85.62 & \textbf{12.46} & 79.20
			& 74.55 & 129.47 & 55.20
			& 73.10 & 139.15 & 54.95\\
			FedMGDA+
			& \textbf{88.62} & \underline{15.49} & \textbf{81.96}
			& 73.68 & 126.05 & 56.93
			& 76.93 & 155.32 & 56.19\\
			H-nobs
			& \underline{88.35} & 16.11 & \underline{81.33}
			& \textbf{84.19} & \underline{56.79} & \underline{69.60}
			& \textbf{83.99} & 67.89 & \underline{67.47}\\
			$q$-FFL+CWTM
			& \underline{88.35} & 16.11 & \underline{81.33}
			& 54.24 & 431.55 & 19.09
			& 63.07 & 466.48 & 19.37\\
			FairMean
			& 87.55 & 18.44 & 80.36
			& \underline{81.87} & \textbf{42.12} & \textbf{70.55}
			& \underline{82.57} & \textbf{31.95} & \textbf{71.29}\\
			\bottomrule
		\end{tabular}
	\end{table*}
	\noindent Table~\ref{tab:fashion_results} shows that FairMean remains competitive without attacks. Under both random and pairwise flips, it achieves the lowest accuracy variance and the highest worst-client accuracy while maintaining competitive average accuracy, consistent with the main results on CIFAR-10.

	\section{Supporting Proofs}
	\label{app:proofs}
	\emph{Proof of Theorem~\ref{thm:clean_dispersion}.}
	Since $\phi_{\kappa,\tau}'(z)=a_{\kappa,\tau}(z)\ge1$, the loss transformation is strictly increasing. The optimality of $w_{\mathrm{avg}}^\star$ therefore implies
		$\phi_{\kappa,\tau}(F_{\cR}(w_{\mathrm{FM}}^\star))
		\ge
		\phi_{\kappa,\tau}(F_{\cR}(w_{\mathrm{avg}}^\star))$.
		The optimality of $w_{\mathrm{FM}}^\star$ and
		\eqref{eq:objective_decomposition} further give
	\[
	\begin{aligned}
			&\kappa\!\left[
			D_\tau(w_{\mathrm{avg}}^\star)-D_\tau(w_{\mathrm{FM}}^\star)
			\right]\\
			&=\Phi_{\cR}^{\kappa,\tau}(w_{\mathrm{avg}}^\star)
			-\Phi_{\cR}^{\kappa,\tau}(w_{\mathrm{FM}}^\star)\\
			&\quad+
			\phi_{\kappa,\tau}(F_{\cR}(w_{\mathrm{FM}}^\star))
			-\phi_{\kappa,\tau}(F_{\cR}(w_{\mathrm{avg}}^\star))
			\ge0.
	\end{aligned}
	\]
	Dividing by $\kappa>0$ proves the result.
	\hfill$\square$
	
	\emph{Proof of Lemma~\ref{lem:lp_instantiation}.}
	For each $i\in\cR$, write its local data as $\{(x_{ij},y_{ij})\}_{j=1}^{m_i}$. For each $i\in\cP$, write its data before and after poisoning as $\{(x_{ij},y_{ij},\widetilde y_{ij})\}_{j=1}^{m_i}$. The largest within-client relabeling fraction is
	\[
	\rho:=\max_{i\in\cP}\frac{1}{m_i}
	\bigl|\{j:\widetilde y_{ij}\ne y_{ij}\}\bigr|.
	\]
	Define $\pi(w;x):=\operatorname{softmax}(h(w;x))$ and $\ell(w;x,y):=-\log \pi_y(w;x)$. For any local sample,
	\[
	\nabla_w\ell(w;x,y)=\nabla_w h(w;x)^\top\bigl(\pi(w;x)-e_y\bigr),
	\]
	where $e_y$ is the $y$-th standard basis vector in $\mathbb R^K$. Since $\|\pi(w;x)-e_y\|\le\sqrt{2}$, the Jacobian bound gives
	\[
	\|\nabla_w\ell(w;x,y)\|
	\le\|\nabla_w h(w;x)\|\,\|\pi(w;x)-e_y\|
	\le\sqrt{2}\Gamma.
	\]
	Averaging over the local samples yields
	\[
	\|\nabla f_i(w)\|\le\sqrt{2}\Gamma,\quad i\in\cR,\ w\in\cW.
	\]
	Thus, the regular-client gradient bound holds with $G=\sqrt{2}\Gamma$. Fix $i\in\cP$. Because protocol-following label poisoning leaves $x$ unchanged, the softmax terms cancel and
	\[
	\nabla_w\ell(w;x,\widetilde y)-\nabla_w\ell(w;x,y)
		=\nabla_w h(w;x)^\top(e_y-e_{\widetilde y}).
	\]
	For every $j$ satisfying $\widetilde y_{ij}\ne y_{ij}$, the spectral norm bound gives
	\[
	\begin{aligned}
			&\left\|
			\nabla_w h(w;x_{ij})^\top
			\bigl(e_{y_{ij}}-e_{\widetilde y_{ij}}\bigr)
			\right\|\\
			&\le
			\|\nabla_w h(w;x_{ij})\|
			\|e_{y_{ij}}-e_{\widetilde y_{ij}}\|
			\le \sqrt{2}\Gamma.
	\end{aligned}
	\]
	Averaging over the local samples of client $i$, with unchanged labels contributing zero, yields
	\[	
		\|\nabla\widetilde f_i(w)-\nabla f_i(w)\|
		\le\frac{\sqrt{2}\Gamma}{m_i}
		|\{j:\widetilde y_{ij}\ne y_{ij}\}|
		\le\sqrt{2}\rho\Gamma.
	\]
	
	Adding and subtracting $\nabla f_i(w)$ therefore yields
	\[
	\begin{aligned}
			\|\nabla\widetilde f_i(w)-\nabla F_{\cR}(w)\|
			&\le \|\nabla\widetilde f_i(w)-\nabla f_i(w)\|\\
			&\quad+\|\nabla f_i(w)-\nabla F_{\cR}(w)\|\\
			&\le \sqrt{2}\,\rho\Gamma+H_{\cP},
	\end{aligned}
	\]
	which proves~\eqref{eq:explicit_alp}.
	\hfill$\square$
	
	\emph{Proof of Lemma~\ref{lem:inexact_direction}.}
	Fix $t\in\{0,\ldots,T-1\}$. For this proof, define the averaged
	server direction
	\[
	u^t:=\frac1n\sum_{i=1}^n v_i^t,
	\]
	so that the main-text definition gives $e^t=u^t-\nabla\Phi_{\cR}^{\kappa,\tau}(w^t)$. Also define
	\[
	g_{\cR}^t:=\nabla F_{\cR}(w^t)
	=\frac1r\sum_{i\in\cR}\nabla f_i(w^t).
	\]
	Assumption~\ref{ass:boundedness} and the triangle inequality imply
	\[
	\|g_{\cR}^t\|
	\le\frac1r\sum_{i\in\cR}\|\nabla f_i(w^t)\|
	\le G.
	\]
	For each $i\in\cR$, let
	$s_i^t:=f_i(w^t)/(f_i(w^t)+\tau)$. Since
	$a_{\kappa,\tau}(f_i(w^t))=1+\kappa s_i^t$ and
	$0\le s_i^t\le1$, we obtain
	\[
	\begin{aligned}
		\nabla\Phi_{\cR}^{\kappa,\tau}(w^t)
		&=g_{\cR}^t+
		\frac{\kappa}{r}\sum_{i\in\cR}
		s_i^t\nabla f_i(w^t),\\
		\|\nabla\Phi_{\cR}^{\kappa,\tau}(w^t)-g_{\cR}^t\|
		&\le\frac{\kappa}{r}\sum_{i\in\cR}
		s_i^t\|\nabla f_i(w^t)\|
		\le\kappa G.
	\end{aligned}
	\]
	
	We first consider $p=0$. In this case, $n=r$ and $\delta=0$, and
	the definitions of $v_i^t$ and $u^t$ give
	\[
	u^t=\frac1r\sum_{i\in\cR}
	a_{\kappa,\tau}(f_i(w^t))\nabla f_i(w^t)
	=\nabla\Phi_{\cR}^{\kappa,\tau}(w^t).
	\]
	Thus, $e^t=0$, and \eqref{eq:effective_radius} holds.
	
	Suppose next that $p>0$. For $i\in\cP$, define
	\[
	\widetilde a_i^t:=
	a_{\kappa,\tau}(\widetilde f_i(w^t)),
	\quad
	q_{\cP}^t:=\frac1p\sum_{i\in\cP}
	\widetilde a_i^t\nabla\widetilde f_i(w^t).
	\]
	The nonnegativity of $\widetilde f_i$ implies
	$1\le\widetilde a_i^t\le1+\kappa$ and
	$0\le\widetilde a_i^t-1\le\kappa$. Adding and subtracting
	$\widetilde a_i^t g_{\cR}^t$ within each summand gives
	\[
	\begin{aligned}
		q_{\cP}^t-g_{\cR}^t
		={}&\frac1p\sum_{i\in\cP}\widetilde a_i^t
		\bigl(\nabla\widetilde f_i(w^t)-g_{\cR}^t\bigr)\\
		&+\frac1p\sum_{i\in\cP}
		(\widetilde a_i^t-1)g_{\cR}^t.
	\end{aligned}
	\]
	Therefore, Assumption~\ref{ass:boundedness} yields
	\[
	\begin{aligned}
		\|q_{\cP}^t-g_{\cR}^t\|
		&\le\frac1p\sum_{i\in\cP}\widetilde a_i^t
		\|\nabla\widetilde f_i(w^t)-g_{\cR}^t\|\\
		&\quad+\frac1p\sum_{i\in\cP}
		(\widetilde a_i^t-1)\|g_{\cR}^t\|\\
		&\le(1+\kappa)A_{\mathrm{LP}}+\kappa G.
	\end{aligned}
	\]
	
	The regular-client contribution equals $r\nabla\Phi_{\cR}^{\kappa,\tau}(w^t)$.
	The identities $r/n=1-\delta$ and $p/n=\delta$ therefore give
	\[
	\begin{aligned}
		u^t
		&=\frac{r}{n}\nabla\Phi_{\cR}^{\kappa,\tau}(w^t)
		+\frac{p}{n}q_{\cP}^t\\
		&=(1-\delta)\nabla\Phi_{\cR}^{\kappa,\tau}(w^t)
		+\delta q_{\cP}^t.
	\end{aligned}
	\]
	Subtracting $\nabla\Phi_{\cR}^{\kappa,\tau}(w^t)$ from both sides and using the definition of $e^t$ yields
	\[
	e^t=\delta
	\bigl(q_{\cP}^t-\nabla\Phi_{\cR}^{\kappa,\tau}(w^t)\bigr).
	\]
	Combining the preceding two bounds gives
	\[
	\begin{aligned}
		\|e^t\|
		&\le\delta\bigl(
		\|q_{\cP}^t-g_{\cR}^t\|
		+\|g_{\cR}^t-\nabla\Phi_{\cR}^{\kappa,\tau}(w^t)\|
		\bigr)\\
		&\le\delta\big[(1+\kappa)A_{\mathrm{LP}}+2\kappa G\big]
		=\epsilon(\delta,\kappa).
	\end{aligned}
	\]
	This proves \eqref{eq:effective_radius} and completes the proof.
	\hfill$\square$
	
	\emph{Proof of Theorem~\ref{thm:stationarity}.}
	We first establish the smoothness of $\Phi_{\cR}^{\kappa,\tau}$. For any
	$x,y\in\cW$, convexity of $\cW$ and the regular-client gradient bound
	give
	\[
	\begin{aligned}
		|f_i(y)-f_i(x)|
		&=\left|\int_0^1
		\left\langle\nabla f_i(x+s(y-x)),y-x\right\rangle ds\right|\\
		&\le G\|y-x\|.
	\end{aligned}
	\]
	For brevity, set
	$a_i(w):=a_{\kappa,\tau}(f_i(w))$. The derivative bound following
	\eqref{eq:bounded_weight} gives
	\[
	|a_i(x)-a_i(y)|
	\le\frac{\kappa}{\tau}|f_i(x)-f_i(y)|
	\le\frac{\kappa G}{\tau}\|x-y\|.
	\]
	Define
	$h_i(w):=a_i(w)\nabla f_i(w)$. Adding and subtracting
	$a_i(x)\nabla f_i(y)$ and using $1\le a_i(x)\le1+\kappa$ gives
	\[
	\begin{aligned}
		\|h_i(x)-h_i(y)\|
		&\le a_i(x)
		\|\nabla f_i(x)-\nabla f_i(y)\|\\
		&\quad+|a_i(x)-a_i(y)|
		\|\nabla f_i(y)\|\\
		&\le\left[(1+\kappa)L+
		\frac{\kappa}{\tau}G^2\right]\|x-y\|\\
		&=L_\Phi(\kappa,\tau)\|x-y\|.
	\end{aligned}
	\]
	Since $\nabla\Phi_{\cR}^{\kappa,\tau}(w)=r^{-1}\sum_{i\in\cR}h_i(w)$, averaging
	this bound shows that $\Phi_{\cR}^{\kappa,\tau}$ is $L_\Phi(\kappa,\tau)$-smooth on $\cW$.
	
	For each $t=0,\ldots,T-1$, let $g^t:=\nabla\Phi_{\cR}^{\kappa,\tau}(w^t)$. By
	Lemma~\ref{lem:inexact_direction}, the update satisfies
	$w^{t+1}=w^t-\alpha(g^t+e^t)$. The segment between $w^t$ and
	$w^{t+1}$ lies in $\cW$, so the descent lemma gives
	\[
	\begin{aligned}
		\Phi_{\cR}^{\kappa,\tau}(w^{t+1})
		&\le\Phi_{\cR}^{\kappa,\tau}(w^t)
		-\alpha\langle g^t,g^t+e^t \rangle\\
		&\quad+\frac{L_\Phi(\kappa,\tau)\alpha^2}{2}\|g^t+e^t\|^2\\
		&\le\Phi_{\cR}^{\kappa,\tau}(w^t)
		-\alpha\langle g^t,g^t+e^t\rangle\\
		&\quad+\frac{\alpha}{2}\|g^t+e^t\|^2,
	\end{aligned}
	\]
	where the second inequality follows from $\alpha L_\Phi(\kappa,\tau)\le1$. The
	identity
	\[
	-\langle g,g+e\rangle+\frac12\|g+e\|^2
	=\frac12\bigl(\|e\|^2-\|g\|^2\bigr)
	\]
	therefore yields
	\begin{equation}
		\begin{split}
			\Phi_{\cR}^{\kappa,\tau}(w^{t+1}) \le{}&
			\Phi_{\cR}^{\kappa,\tau}(w^t)-\frac{\alpha}{2}\|g^t\|^2+
			\frac{\alpha}{2}\|e^t\|^2.
		\end{split}
		\label{eq:proof_descent}
	\end{equation}
	
	Rearranging \eqref{eq:proof_descent} and summing from $t=0$ to
	$T-1$ gives
	\[
	\frac{\alpha}{2}\sum_{t=0}^{T-1}\|g^t\|^2
	\le\Phi_{\cR}^{\kappa,\tau}(w^0)-\Phi_{\cR}^{\kappa,\tau}(w^T)
	+\frac{\alpha}{2}\sum_{t=0}^{T-1}\|e^t\|^2.
	\]
	Since $\|e^t\|\le\epsilon(\delta,\kappa)$, the accumulated error term is at most $T\epsilon(\delta,\kappa)^2$. Moreover, the nonnegativity of $\phi_{\kappa,\tau}$ and~\eqref{eq:fair_objective} imply $\Phi_{\cR}^{\kappa,\tau}(w^T)\ge0$. Therefore,
	\[
	\begin{aligned}
		\frac1T\sum_{t=0}^{T-1}\|g^t\|^2
		&\le\frac{2[\Phi_{\cR}^{\kappa,\tau}(w^0)-\Phi_{\cR}^{\kappa,\tau}(w^T)]}
		{\alpha T}+\epsilon(\delta,\kappa)^2\\
		&\le\frac{2\Phi_{\cR}^{\kappa,\tau}(w^0)}{\alpha T}
		+\epsilon(\delta,\kappa)^2.
	\end{aligned}
	\]
	Recalling the definition of $g^t$ proves
	\eqref{eq:stationarity_bound}.
	\hfill$\square$
	
	\fi
	
\end{document}